%% file: deltareplay_iclr2027_v7.tex
\documentclass{article} 
\usepackage{iclr2027_conference,times}

\input{math_commands.tex}

\usepackage{hyperref}
\usepackage{url}
\usepackage{booktabs}
\usepackage{amsmath}
\usepackage{amssymb}
\usepackage{amsthm}
\usepackage{graphicx}
\usepackage{xcolor}
\usepackage{multirow}
\usepackage{enumitem}
\usepackage{algorithm}
\usepackage{algpseudocode}
\usepackage{microtype}
\usepackage{listings}
\usepackage{textcomp}
\usepackage{placeins}

\newcommand{\ours}{DeltaReplay}
\newcommand{\cmark}{$\checkmark$}
\newcommand{\xmark}{$\times$}
\newcommand{\pmark}{$\triangle$}
\newtheorem{proposition}{Proposition}

\title{DeltaReplay: Task-Relative Memory Reuse \\ for Mobile GUI Agents}

\author{Yudong Bai, Yihong Chen, Quanming Yao\thanks{Corresponding author.}\hspace{0.35em}, Yaqing Wang \\ Tsinghua University}

\iclrfinalcopy 
\begin{document}

\maketitle
\lhead{Preprint. Under review.}

\begin{abstract}
Memory-augmented mobile GUI agents store successful execution trajectories and reuse them in later tasks, but a stored trajectory rarely matches a new task exactly.
The new task may use different parameters, share only some of its steps with a stored trajectory, or have no relevant record in memory.
Forcing the agent to use irrelevant memory can mislead it, whereas discarding memory that may still be useful deprives it of guidance from past experience.
To address this dilemma, we propose DeltaReplay, a step-level memory reuse framework that decides how to use existing memory without modifying it.
We observe that the reusable part of a stored record is determined not by the record itself but by its relation to the new task, mainly through two factors: page-level consistency and action-level generality.
We therefore store execution trajectories as paths in a transition graph, whose nodes (pages) and edges (actions between pages) capture these two factors. At reuse time, the action on each edge is split into a task-independent operation and task-specific parameters.
DeltaReplay then compares each recorded step with the new task and the current screen, and decides whether to follow it, execute it after replacing its parameters, or leave it to the base agent.
On AndroidWorld and SPA-Bench, DeltaReplay improves the task success rate over a base agent with the same backbone by up to 10.3 and 25.0 percentage points, respectively.
These results indicate that deciding at each step how to use retrieved memory lets agents benefit even from partially matching trajectories.
\end{abstract}

\section{Introduction}
\label{sec:intro}

Mobile GUI agents built on multimodal large language models (MLLMs) read screens, follow natural-language instructions,
and act through taps, text entry, and swipes~\citep{zhang2025appagent,wang2024mobile,hong2024cogagent,qin2025uitars}.
Many of the tasks they receive recur: users file the same kind of expense, message the same contacts, and toggle the same settings.
An agent that stores its successful runs can reuse them instead of reasoning from scratch,
and a growing line of work equips GUI agents with trajectory, workflow, and graph memories for this purpose~\citep{wen2024autodroid,lee2024mobilegpt,wang2025agentworkflow,chen2025pgagent,xie2025guiexplorer,zhong2026actionengine}.

However, a stored trajectory rarely fits a new task exactly.
The new task may use different parameters, such as another contact or amount, or share only some of its steps with the stored task,
and frequent app updates can change the screens on which recorded actions were taken~\citep{42matters2026update,mcilroy2016fresh,lu2025transbench}.
A retrieved step may therefore be executable as recorded, useful only after being re-grounded on the current screen, or not applicable at all,
and each case calls for a different use: replay, re-ground, or discard.
Retrieving the right memory is thus only the beginning of using it: the agent must still decide, at each step, which of these cases applies.

Most memory-augmented GUI agents support only some of these uses, and fix them in advance (Table~\ref{tab:operators}):
some always add retrieved memory to the prompt~\citep{xie2025guiexplorer,chen2025pgagent,guan2025kgrag,sun2026magnet},
some choose only between executing a stored step and discarding it~\citep{jiang2025appagentx,mi2026darwinian,liu2025mobimem},
and some choose only between re-grounding and discarding~\citep{chai2026uikobe}. Each missing use causes a kind of loss.
\textbf{(i) Slow reuse: replayable steps are still reasoned through.}
Reasoning through a step is slow; GUI-explorer, for example, reports 66\,s per step with GPT-4o~\citep{xie2025guiexplorer}.
Yet agents that add memory to the prompt, or re-ground every step, do so even for steps that could be replayed as recorded.
Replaying them saves this time: on AndroidWorld tasks whose exact goal is already in memory, our method replays about half of the steps and takes 9.3\,s per step, against 19.0--32.2\,s for prompting-based memory baselines (Section~\ref{sec:coverage}).
\textbf{(ii) Over-discarding: partly usable memory is dropped as a whole.}
Execute-or-discard agents check whether a record can be used before reusing it, for example by asking an LLM whether a shortcut's preconditions hold~\citep{wang2025mobileagente}
or whether the record applies~\citep{jiang2025appagentx}, or by matching elements in the UI hierarchy~\citep{liu2025mobimem};
a record that passes is executed as a whole, and one that fails is discarded as a whole.
Yet a record often differs from the new task in only a few steps, for example in one typed value or one target label.
For such a record, deciding on the whole record is wrong either way: accepting it executes the old parameters as recorded, and rejecting it also throws away the steps that could still be followed.
In our experiments, records of the same template that differ from the task only in parameters solve 9.1\% of the tasks when replayed verbatim, and ignoring them leaves the base agent at 63.6\%,
whereas following their matching steps and replacing only their parameters reaches 86.4\% (Section~\ref{sec:ablation}).
\textbf{(iii) Misuse: inapplicable memory is still applied.}
LLM agents may reproduce retrieved experience even when it does not correspond to the current task~\citep{xiong2025memory};
in an ablation of Darwinian Memory, searching memory with subtasks keyed by the task goal alone lowers AndroidWorld success to 29.7\%,
11.3 percentage points below the 41.0\% obtained without memory~\citep{mi2026darwinian}.
In our experiments, when memory holds only other, weakly related tasks of the same app, adding retrieved records to the prompt lowers success from 39.6\% to 25.0\% (Section~\ref{sec:coverage}).

These failures mostly share one cause: existing methods treat a memory as a whole, using or discarding it together, whereas reusability varies across the steps of the same memory.
A record's reusable steps depend on the new task and the current screen, so they can only be decided step by step at use time.
We therefore present \ours{}, which makes the most of existing memory without modifying it and chooses among the three uses at every step (Figure~\ref{fig:method}).
\emph{Route~A (replay)} executes the recorded step once a structural check confirms that the screen matches, the recorded target maps to a current element, and the step carries no parameters of the old task.
\emph{Route~B (re-ground)} handles a step that applies but cannot be executed as recorded: it replaces the parameters the step carries from the old task with those of the current task (Figure~\ref{fig:intro}).
\emph{Route~C (discard)} leaves the step to the base agent when no recorded step applies.
Route~A addresses failure~(i), Route~B failure~(ii), and Route~C failure~(iii).
To choose a route, \ours{} stores trajectories as paths of a transition graph and retrieves them by task identity.
It separates each step's reusable operation (which page to open, which button to tap) from the parameters it carries from the old task:
names, dates, and amounts, as well as answers and when to finish.
It matches the current step structurally (can it be executed as recorded?) and, only when the current screen has no matching position in the record,
semantically (which recorded step comes next?), with an option to reject all candidates.

\begin{figure}[t]
    \centering
    \includegraphics[width=\linewidth,trim=0 14 0 0,clip]{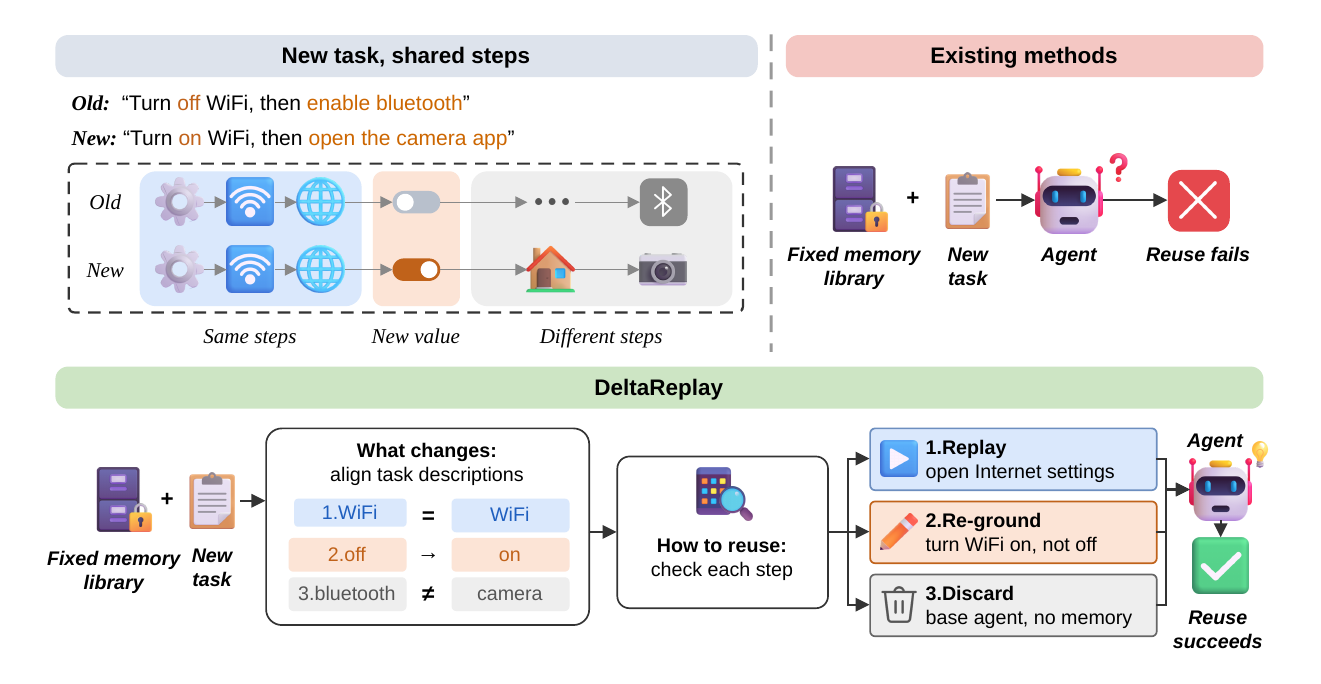}
    \caption{Reusing partly matching memory step by step.
    (Top left) The old and new tasks share their first steps but differ in one value and in the later steps.
    (Top right) Existing methods use or discard a stored record as a whole and fail here.
    (Bottom) \ours{} aligns the two task descriptions and decides at each step whether to replay the step,
    re-ground it with the new value, or discard it.}
    \label{fig:intro}
\end{figure}

Our contributions are as follows:
\begin{itemize}[leftmargin=1.5em,itemsep=1pt,topsep=2pt]
    \item We identify three failure modes of memory reuse in GUI agents (slow reuse of replayable steps, over-discarding of partly usable memory,
    and misuse of inapplicable memory) and trace them mostly to one cause: existing methods treat a memory as a whole,
    whereas reusability varies across the steps of the same memory.
    \item We propose \ours{}, which organizes execution trajectories as a transition graph, splits the action on each edge at reuse time
    into a task-independent operation and task-specific parameters, and chooses at each step among replay, re-grounding,
    and discard according to whether the screen matches and whether the parameters must be replaced.
    \item On 176 tasks from AndroidWorld~\citep{rawles2025androidworld} and SPA-Bench~\citep{chen2024spa},
    \ours{} improves the task success rate over a base agent with the same backbone by 7.8--10.3 and 16.6--25.0 percentage points, respectively,
    and outperforms two memory baselines on AndroidWorld; it cuts per-step latency by up to 57\% when the task is already in memory
    and shows no measurable loss when memory does not apply. Ablations show that verification and discarding are necessary, and route statistics and a step-level annotation characterize how each route is used.
\end{itemize}

\section{Related Work}
\label{sec:related}

\textbf{Memory for GUI agents.}
Prior work stores exploration and app memories~\citep{wen2024autodroid,lee2024mobilegpt,li2024appagent}, workflows~\citep{wang2025agentworkflow}, page graphs and page memories~\citep{chen2025pgagent,kong2025mapagent}, transition-aware knowledge~\citep{xie2025guiexplorer}, state machines and knowledge graphs~\citep{zhong2026actionengine,guan2025kgrag,yu2026graphpilot,chai2026uikobe,qin2026eam}, memory trees~\citep{tan2026enhancing}, demonstrations~\citep{liu2026learnact}, element-level functional memory~\citep{sun2026magnet}, structured, latent, or self-evolving experience~\citep{xiao2026uimem,zhu2026hybrid,li2026echotrailgui,jin2026sega,zhang2026memw}, and tips and shortcuts~\citep{wang2025mobileagente}; MemGUI-Bench evaluates such memories~\citep{liu2026memguibench}.
The closest to ours is MAGNET, which separates what stays stable in memory from what changes by linking the varying appearances of a UI element to its stable function and abstracting similar trajectories into workflows with placeholders~\citep{sun2026magnet};
this separation, however, is made when the memory is built, and the retrieved elements and workflows only serve as references in the prompt.
Overall, these works focus on what to store and how to organize and retrieve it; a retrieved memory is then usually used as a whole, placed in the context or executed in full, and which of its parts fit the new task is rarely judged on its own.

\textbf{How retrieved memory enters the decision.}
Retrieval alone does not settle how its results should be used: in retrieval-augmented generation, CRAG refines, discards, or supplements retrieved documents per query~\citep{yan2024corrective}.
In GUI agents, one family uses memory as context: the model reasons over retrieved records, workflows, or knowledge in its prompt at every step~\citep{xie2025guiexplorer,chen2025pgagent,guan2025kgrag,sun2026magnet}, with refinements that drop untrusted memory blocks through a learned trust gate~\citep{zhang2026focusmem}, let the agent decide when to retrieve~\citep{zhao2026rethinking}, or turn retrieved cases into support for the current candidate actions~\citep{zhang2026beyond}.
Another family executes memory in units. Mobile-Agent-E reuses shortcuts with preconditions and arguments~\citep{wang2025mobileagente}, AppAgentX lets an LLM decide whether a high-level action applies and fill in its arguments~\citep{jiang2025appagentx}, and Darwinian Memory replays whole subtask units indexed by precondition and goal~\citep{mi2026darwinian}. MobiMem records fine-grained interaction sequences to reduce reliance on model inference~\citep{liu2025mobimem}, AgentRR selects experience at different levels of abstraction at run time~\citep{feng2025agentrr}, and EAM extracts an executable path from a knowledge graph and lets an LLM replace its parameters in a single pass before execution~\citep{qin2026eam}. Concurrent EchoPath turns validated trajectories into callable procedures that rebind only inputs declared in advance and are invoked only when they can be replayed deterministically~\citep{zhao2026echopath}. UI-KOBE uses an app graph as guidance: when the current screen is matched to one of its nodes, it chooses among the transitions of that node~\citep{chai2026uikobe}.
Both families, however, decide at the level of a whole memory or a predefined unit: the first leaves the choice of which part to use to the model without any check, and the second fixes which parts may vary when the memory is built or the unit is invoked, whereas reusability varies from step to step and with the new task.

\textbf{Recorded interactions under change.}
Records become outdated as apps change: app graphs must be repaired or re-explored after major UI updates~\citep{chai2026uikobe}, and GUI testing repairs recorded test scripts for new app versions~\citep{pan2020meter,xu2021guider} or migrates tests across apps by semantic matching of GUI events~\citep{behrang2019atm,lin2019craftdroid}, which remains imperfect~\citep{mariani2021semantic}.
These approaches repair a record before reusing it, at the cost of extra device interaction and model calls; \ours{} instead keeps the record fixed and decides at each step which of its steps still hold.

\section{Problem Formulation}
\label{sec:formulation}

We formalize the decision a GUI agent makes at each step of an episode.
At step $t$ the agent observes $s_t=(x_t,h_t,\mathcal{U}_t)$, where $x_t$ is the screenshot, $h_t$ the parsed UI tree,
and $\mathcal{U}_t$ the actionable elements; a candidate generator turns $\mathcal{U}_t$ into the executable action set $\mathcal{A}_t$.
With the instruction $y$ and action history $H_t$, the decision context is
$q_t=(y,H_t,s_t,\mathcal{A}_t)$.
A base policy $\pi_{\mathrm{base}}$ maps $q_t$ to an action in $\mathcal{A}_t$ without memory.
Memory is a read-only graph $\mathcal{G}=(\mathcal{V},\mathcal{E})$ built from recorded successful runs,
in which each run forms a path and each edge $e\in\mathcal{E}$ is one recorded step (Section~\ref{sec:graph}).

\textbf{Executability and applicability.}
Two properties of a retrieved step $e$ determine how it should be used.
Its \emph{executability} $\mu_t(e)\in\mathcal{A}_t\cup\{\bot\}$ is the current action that realizes the recorded action of $e$ as recorded, or $\bot$ if none does or if $e$ carries parameters of the old task that differ from those of $y$, so that executing it as recorded would enter the old values.
Its \emph{applicability} $\rho_t(e)\in\{0,1\}$ indicates whether the operation of $e$ advances the current task $y$ on the current screen.
With $\mathcal{E}_t^{+}=\{e\in\mathcal{E}:\rho_t(e)=1\}$, each step falls into one of three regimes:
\begin{equation}
    c_t=
    \begin{cases}
        \textsc{direct}, & \exists\, e\in\mathcal{E}_t^{+}:\ \mu_t(e)\neq\bot,\\
        \textsc{partial}, & \mathcal{E}_t^{+}\neq\emptyset \ \text{and}\ \mu_t(e)=\bot\ \ \forall e\in\mathcal{E}_t^{+},\\
        \textsc{inapplicable}, & \mathcal{E}_t^{+}=\emptyset.
    \end{cases}
    \label{eq:regimes}
\end{equation}
The correct use of memory differs across regimes: execute $\mu_t(e)$ in the direct regime,
select a new action in $\mathcal{A}_t$ guided by $e$ in the partial regime, and ignore memory in the inapplicable regime.
A step is partial either because it carries parameters of the old task or because its recorded target is missing from the current screen; \ours{} re-grounds the first kind (Route~B) and leaves the second to the base agent.
Neither $\rho_t$ nor $\mu_t$ is observed: the agent knows which task a record was made for, but not whether a given step still applies, so \ours{} estimates both properties at every step.
An agent that supports only some of the three uses fails in at least one regime (Table~\ref{tab:operators}).

\begin{table}[t]
    \centering
    \caption{Behavior of agents with different uses of memory in each regime of Eq.~(\ref{eq:regimes}).}
    \label{tab:operators}
    \small
    \setlength{\tabcolsep}{4pt}
    \resizebox{\linewidth}{!}{%
    \begin{tabular}{lllll}
    \toprule
    Uses supported & \textsc{direct} & \textsc{partial} & \textsc{inapplicable} & Representative work \\
    \midrule
    Prompt only          & \xmark{} reasons anyway    & \cmark{} guides reasoning   & \xmark{} misleads         & GUI-explorer~\citep{xie2025guiexplorer}; \\
                         &                            &                             &                           & MAGNET~\citep{sun2026magnet} \\
    Execute or discard   & \cmark{} executes, fast    & \pmark{} predefined arguments only & \cmark{} falls back       & AppAgentX~\citep{jiang2025appagentx}; \\
                         &                            &                             &                           & Darwinian Memory~\citep{mi2026darwinian} \\
    Re-ground or discard & \xmark{} re-grounds anyway & \pmark{} on known screens   & \cmark{} falls back       & UI-KOBE~\citep{chai2026uikobe} \\
    \midrule
    \ours{} (ours)       & \cmark{} Route A: replay   & \cmark{} Route B: re-ground (parameters) & \cmark{} Route C: discard & This work \\
    \bottomrule
    \end{tabular}}
    \vspace{-5mm}
\end{table}

\section{\ours{}}
\label{sec:method}

Figure~\ref{fig:method} gives an overview, and Algorithm~\ref{alg:routing} (Appendix~\ref{app:algorithm}) lists the per-step procedure.

\begin{figure}[t]
    \centering
    \includegraphics[width=\linewidth,trim=0 20 0 0,clip]{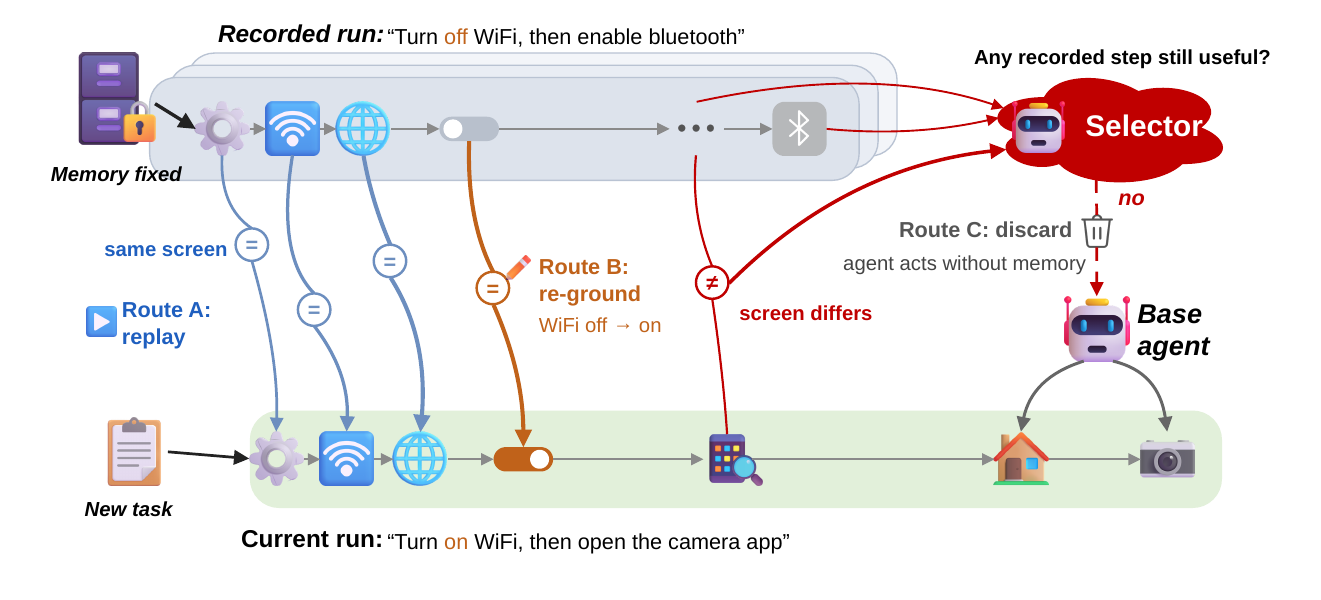}
    \caption{Overview of \ours{}. Memory holds recorded runs (top), retrieved by task identity; the current run (bottom) is a new task.
    Structural matching compares the current screen with the recorded screen at the tracked position ($=$ or $\neq$).
    Route~A (blue) replays a matched step on the corresponding current element; Route~B (orange) re-grounds a matched step that carries a parameter of the old task (WiFi off $\rightarrow$ on).
    When the screen differs, the selector checks whether any recorded step still applies; if none does, Route~C lets the base agent act.}
    \label{fig:method}
\end{figure}

\subsection{Memory as a Transition Graph}
\label{sec:graph}

Each recorded run forms a path in the graph. Each recorded state becomes a node $v=(x_v,h_v)$ holding the screenshot $x_v$
and the UI tree in HTML-like form $h_v$, and each recorded step from $v_i$ to $v_j$ becomes an edge
\begin{equation}
    e=(v_i,v_j,a_e)
\end{equation}
holding the recorded primitive action $a_e$ (type, element index, and any typed text).
Each run also keeps the task description $y_e$ it was recorded for and its task identity (task, template or task family, and app), taken from the benchmark's task identifiers.
At the start of an episode, \ours{} retrieves paths once by task identity:
it takes the records of the same task or the same template when memory holds them;
otherwise it takes the records of easier tasks of the same family (SPA-Bench only) and, if there are none, those of other tasks of the same app (on AndroidWorld, of at most two tasks whose descriptions are closest in wording to the current one).
Each edge has two parts, which determine how far its action generalizes: the \emph{operation} it performs, such as opening a page or tapping a button, which can be reused whenever the screen matches,
and the \emph{parameters} it carries from the old task, such as a typed name, a selected date, an answer, or the decision to finish, which must be replaced for the current task; rules derive this split at run time (Section~\ref{sec:routing}).

\subsection{Structural Matching: Can the Step Be Executed as Recorded?}
\label{sec:structural}

Structural matching (the $=$ and $\neq$ marks in Figure~\ref{fig:method}) checks page-level consistency along the retrieved path: it compares the current screen with the node at the tracked position
by the cosine and structural similarities of their UI trees, each against its own threshold, and also accepts a sufficiently high fuzzy text match on its own;
the thresholds are loose while the agent follows the path and strict when it searches the record for a new matching position.
For the outgoing edge of a matched node, a mapping step (the blue arrows in Figure~\ref{fig:method}) looks for the current element that plays the role of the recorded target:
\begin{equation}
    \hat a_A=\mathrm{VerifyMap}(a_e,h_{v_i},h_t,\mathcal{A}_t)\in\mathcal{A}_t\cup\{\bot\}.
\end{equation}
$\mathrm{VerifyMap}$ matches the recorded target by its structural context rather than by its index, so a target whose index changed
(for example from 23 to 26) is still found.
A non-abstaining result is the estimate of $\mu_t(e)$.
An edge that carries parameters of the old task is not executed as recorded even when its target is found; it goes to Route~B (Section~\ref{sec:routing}).
If the current screen has no matching position in the record, nothing is executed and semantic matching runs; if the screen matches but the mapping fails, the step is left to the base agent (Route~C).

\subsection{Semantic Matching: Does the Step Apply?}
\label{sec:semantic}

Semantic matching (the selector in Figure~\ref{fig:method}) runs only when the current screen has no matching position in the record.
It takes up to $K$ candidate steps $\mathcal{E}_t^K$ whose recorded screens match the current screen under the strict thresholds,
and an LLM selector chooses the correct next step, with an explicit option to reject all candidates:
\begin{equation}
    \hat{e}_t=\mathrm{Select}_{\mathrm{edge}}(q_t,\mathcal{E}_t^K)\in\mathcal{E}_t^K\cup\{\bot\}.
\end{equation}
For records of the same task or template, a selected step restores the position on the path, and later steps are taken without asking the model as long as the screens match;
records borrowed from other tasks of the same app are not followed, and their steps are searched again at every step.
$\hat{e}_t=\bot$ is the estimate $\rho_t=0$ for all candidates.
Thresholds and $K$ are given in Appendix~\ref{app:implementation}.

\subsection{Three Routes}
\label{sec:routing}

The two estimates select how memory enters the current decision.

\textbf{Route A: replay} (verified direct reuse; blue in Figure~\ref{fig:method}).
If structural matching returns $\hat a_A\neq\bot$ for an edge that carries no parameter of the old task, \ours{} executes this verified current action, not the recorded element index.
While the screens keep matching the tracked position, Route~A replaces the base agent's action generation with verification and calls no model;
only a step whose position must be found by semantic matching, such as the first step, costs one selector call.

\textbf{Route B: re-ground} (parameter replacement; orange in Figure~\ref{fig:method}).
A step applies without being executable as recorded when it carries parameters of the old task.
\ours{} decides with four rules, without a model call, whether a step carries old parameters: it answers or declares the task complete;
it types text that appears verbatim in at most one of the two task descriptions; the text of its tapped element contains a word or number
that appears only in the old task description, including the numbers on date and time pickers; or it taps a row of a list whose content is not on the current screen.
\ours{} then aligns the recorded task description $y_e$ with the current instruction $y$ word by word, pairs the unmatched fragments as old and new parameters,
substitutes the new parameters into the typed text and the label of the tapped target, and looks for the element with the new label on the current screen:
\begin{equation}
    \hat a_B=\mathrm{Adapt}(e,y_e,y,\mathcal{A}_t)\in\mathcal{A}_t\cup\{\bot\}.
\end{equation}
If the descriptions cannot be aligned or several elements carry the new label, $\mathrm{Adapt}$ abstains and the base agent performs the step.
Route~B also reuses steps borrowed from records of other tasks of the same app. For these the parameter identification is stricter:
any typed text, any completion, and any tap on an element whose text contains a word of the old task description count as parameters,
so only navigation steps are borrowed. These steps are executed as mapped, like Route~A, and count as Route~B only because they transfer to a task they were not recorded for; the selector (Section~\ref{sec:semantic}) chooses among them or rejects them all.

\textbf{Route C: discard} (base-agent fallback; the ``no'' branch in Figure~\ref{fig:method}).
In all other cases, including a failed mapping, rejection by $\mathrm{Select}_{\mathrm{edge}}$, and abstention by $\mathrm{Adapt}$, memory is dropped and the base agent acts on the same candidate set,
$a_C=\pi_{\mathrm{base}}(q_t)$, with the same prompt and candidate set as without memory.
If its action is equivalent to the next recorded step, including a step whose parameters it had to fill in itself, the position on the path still advances, so later steps can be reused again.
The decision rule is
\begin{equation}
    a_t=
    \begin{cases}
        \hat a_A, & \hat a_A\neq\bot,\\
        \hat a_B, & \hat a_A=\bot,\ \hat a_B\neq\bot,\\
        a_C, & \text{otherwise}.
    \end{cases}
    \label{eq:rule}
\end{equation}
All three routes return an action from $\mathcal{A}_t$, so memory is always tied to the current screen.
Because a replayed or re-grounded step can fail silently, \ours{} does not repeat such a step on an unchanged screen.
$\mathrm{Select}_{\mathrm{edge}}$ and $\pi_{\mathrm{base}}$ are prompted calls to the backbone, $\mathrm{Adapt}$ applies rules, and nothing is trained.
Appendix~\ref{app:theory} gives sufficient conditions for correct transfer and shows that, if correct matches do not hurt,
discarding bounds the per-step loss by the false-acceptance rate.

\section{Experiments}
\label{sec:experiments}

We ask whether \ours{} improves success and latency over the base agent and the memory baselines (RQ1, Section~\ref{sec:main}),
how its gains depend on memory coverage (RQ2, Section~\ref{sec:coverage}), whether it sends steps to suitable routes and those steps are correct (RQ3, Section~\ref{sec:routes}),
and whether verifying and discarding memory are necessary (RQ4, Section~\ref{sec:ablation}).

\subsection{Setup}
\label{sec:setup}

\textbf{Benchmarks and memory.}
AndroidWorld-116 contains one instance of each of the 116 task templates of AndroidWorld~\citep{rawles2025androidworld}.
Its memory holds 42 successful runs of the Qwen3.8-Flash base agent on \emph{other} instances, one for each of 42 templates, recorded before evaluation;
the other 74 templates have no record, and both backbones use this memory (Appendix~\ref{app:benchmarks}), so \ours{} with Qwen3-VL-30B replays runs of the stronger Qwen3.8-Flash.
We group the test tasks by the closest record in memory: \emph{same task} ($n{=}20$), a record of the same template with the same goal text;
\emph{same template} ($n{=}22$), a record of the template with other parameters;
\emph{same app} ($n{=}48$), records of other tasks of the same app only; and \emph{none} ($n{=}26$), no record of the app.
SPA-Bench-60 contains 60 tasks drawn at random from the 150 Chinese single-app tasks of SPA-Bench~\citep{chen2024spa},
after excluding tasks with real-world side effects (payment, social, meeting, and hospital apps) and tasks our phone cannot run (Appendix~\ref{app:benchmarks}); all run on a physical phone.
SPA-Bench tasks come in families of three, each extending the previous one.
Its memory is the union of the successful base-agent runs of both backbones on our SPA-Bench task pool, 50 runs of 42 tasks (27 of them in SPA-Bench-60), so each backbone can also replay runs of the other; the groups are
\emph{same task} ($n{=}27$), a record of the task itself;
\emph{same family} ($n{=}15$), a record of an easier task of the same family, which shares the first steps; and
\emph{no family record} ($n{=}18$), no record of the task or of an easier task of its family (12 of these tasks' apps hold records of other families).
The agent is not given the group label, but it retrieves records by the benchmark's task identifiers (Section~\ref{sec:graph}), so the kind of record an episode receives follows its group, and the selector prompt names that kind (Appendix~\ref{app:prompts}).

Because memory contains only successful base-agent runs, coverage groups are correlated with task difficulty.
On AndroidWorld, every template in the same-task and same-template groups was solved by the Qwen3.8-Flash base agent on another instance.
On SPA-Bench-60, memory includes the base-agent runs reported in Tables~\ref{tab:main} and~\ref{tab:groups},
and 32 of the 33 tasks without a record of their own were failed by both backbones in those runs.
Success rates are therefore not comparable across groups, and we report each group separately.

\textbf{Backbones and baselines.}
We use Qwen3-VL-30B-A3B-Instruct~\citep{bai2025qwen3}, a vision-language mixture-of-experts (MoE) model with about 30B total and 3B activated parameters, which reads the screenshot and the UI tree,
and Qwen3.8-Flash~\citep{qwen2026flashnext}, the production version of Qwen3.8-Flash-Next, an MoE model with 125B parameters (6B activated per token).
The \emph{base agent} uses the same backbone, prompts, and candidate-action interface as \ours{}, without memory.
\emph{GUI-Owl} is GUI-Owl-1.5~\citep{xu2026mobileagentv35}, a native GUI agent model without memory.
\emph{RAG-prompting} adds the records most similar to the current task and screen to the base agent's prompt;
\emph{GUI-explorer}~\citep{xie2025guiexplorer} mines transition-aware knowledge, i.e., what an action does, and our adaptation injects the per-step entries most similar to the current task and screen into the prompt.
Both use Qwen3.8-Flash and the same memory as \ours{} (Appendix~\ref{app:baselines}).

\textbf{Metrics.}
We report task success rate (SR) and latency (Lat.), the mean wall-clock time per executed step, excluding the time spent on the device (Appendix~\ref{app:implementation}).
Success is decided by an LLM judge for all methods: the single-app evaluator of SPA-Bench~\citep{chen2024spa} with Qwen3.8-Flash as the judge model, which reads the screenshots of the episode.
We use the same judge on both benchmarks because SPA-Bench has no rule-based checkers; on the AndroidWorld tasks whose checkers work in our harness,
those checkers agree with our success labels on 91.5\% of the episodes (Appendix~\ref{app:benchmarks}).

\textbf{Run-to-run variation.}
Each task is run once per method, and decoding through the cloud APIs is not fully deterministic.
On the 26 AndroidWorld tasks whose app has no record in memory (\emph{none}), \ours{} retrieves nothing and acts through the base agent at every step,
yet its success differs from the base agent's by one task with each backbone and its latency by 2.1\,s (Qwen3-VL-30B) and 4.1\,s (Qwen3.8-Flash) per step (Table~\ref{tab:groups}).
We therefore regard differences of about one task per group, or about 2--4\,s per step, as within run-to-run variation.

\subsection{Main Results (RQ1)}
\label{sec:main}

\begin{table}[t]
    \centering
    \caption{Overall results (SR: task success rate; Lat.: latency per step). Best in bold, second-best underlined;
    Qwen3-VL-30B is Qwen3-VL-30B-A3B-Instruct; --: not run.}
    \label{tab:main}
    \small
    \setlength{\tabcolsep}{5pt}
    \begin{tabular}{llcccc}
    \toprule
    \multirow{2}{*}{Method} & \multirow{2}{*}{Backbone} & \multicolumn{2}{c}{AndroidWorld-116} & \multicolumn{2}{c}{SPA-Bench-60} \\
    \cmidrule(lr){3-4}\cmidrule(lr){5-6}
     & & SR (\%) $\uparrow$ & Lat. (s) $\downarrow$ & SR (\%) $\uparrow$ & Lat. (s) $\downarrow$ \\
    \midrule
    Base agent    & Qwen3-VL-30B   & 43.1 & \textbf{9.3}  & 23.3 & \textbf{10.3} \\
    Base agent    & Qwen3.8-Flash  & 49.1 & 24.8 & 41.7 & 21.0 \\
    GUI-Owl       & GUI-Owl-1.5    & 50.9 & \underline{9.6} & -- & -- \\
    RAG-prompting & Qwen3.8-Flash  & 41.4 & 34.8 & -- & -- \\
    GUI-explorer  & Qwen3.8-Flash  & 47.4 & 19.0 & -- & -- \\
    \midrule
    \ours{}       & Qwen3-VL-30B   & \underline{53.4} & 11.3 & \underline{48.3} & \underline{12.2} \\
    \ours{}       & Qwen3.8-Flash  & \textbf{56.9} & 19.7 & \textbf{58.3} & 20.9 \\
    \bottomrule
    \end{tabular}
    \vspace{-5mm}
\end{table}

Table~\ref{tab:main} shows that \ours{} achieves the highest success rate on both benchmarks.
With the same backbone, it improves over the base agent by 10.3 (Qwen3-VL-30B) and 7.8 points (Qwen3.8-Flash) on AndroidWorld
and by 25.0 and 16.6 points on SPA-Bench.
With Qwen3.8-Flash it is also faster than its base agent on AndroidWorld (19.7\,s vs.\ 24.8\,s per step), partly because replayed steps skip
model reasoning; with the already fast Qwen3-VL-30B, matching adds about 2\,s per step.
Both memory baselines use the same records but fall below the base agent in success (41.4\% and 47.4\% vs.\ 49.1\%),
and RAG-prompting is the slowest agent (34.8\,s per step).
\ours{} also surpasses GUI-Owl-1.5 (50.9\%), a model trained for GUI control, with either general-purpose backbone.

\subsection{Results by Memory Coverage (RQ2)}
\label{sec:coverage}

\begin{table}[t]
    \centering
    \caption{Results by the closest record in memory; best per column in bold, second-best underlined.
    On SPA-Bench-60, memory includes the base-agent runs shown here, so its same-family and no-family-record groups
    consist mostly of tasks both base agents failed (Section~\ref{sec:setup}).}
    \label{tab:groups}
    \small
    \setlength{\tabcolsep}{3.1pt}
    \begin{tabular}{llcccccccc}
    \toprule
    \multicolumn{2}{l}{\emph{AndroidWorld-116}}  & \multicolumn{2}{c}{Same task (20)} & \multicolumn{2}{c}{Same template (22)} & \multicolumn{2}{c}{Same app (48)} & \multicolumn{2}{c}{None (26)} \\
    \cmidrule(lr){3-4}\cmidrule(lr){5-6}\cmidrule(lr){7-8}\cmidrule(lr){9-10}
    Method & Backbone & SR & Lat. & SR & Lat. & SR & Lat. & SR & Lat. \\
    \midrule
    Base agent    & Qwen3-VL-30B  & 70.0 & 8.8 & 63.6 & \textbf{7.8} & 31.3 & \underline{10.1} & 26.9 & \textbf{9.2} \\
    Base agent    & Qwen3.8-Flash & 80.0 & 21.7 & 63.6 & 19.5 & \underline{39.6} & 27.9 & \underline{30.8} & 23.1 \\
    GUI-Owl       & GUI-Owl-1.5   & 75.0 & \underline{7.1} & \underline{81.8} & 10.8 & 37.5 & \textbf{9.9} & \underline{30.8} & \underline{9.5} \\
    RAG-prompting & Qwen3.8-Flash & 80.0 & 32.2 & 77.3 & 31.8 & 25.0 & 38.0 & 11.5 & 32.8 \\
    GUI-explorer  & Qwen3.8-Flash & 80.0 & 19.0 & \textbf{86.4} & 15.9 & 29.2 & 20.7 & 23.1 & 18.2 \\
    \midrule
    \ours{}       & Qwen3-VL-30B  & \underline{85.0} & \textbf{6.2} & \underline{81.8} & \underline{10.4} & \underline{39.6} & 12.8 & \underline{30.8} & 11.3 \\
    \ours{}       & Qwen3.8-Flash & \textbf{90.0} & 9.3 & \textbf{86.4} & 12.6 & \textbf{41.7} & 24.8 & \textbf{34.6} & 19.0 \\
    \midrule
    \multicolumn{2}{l}{\emph{SPA-Bench-60}} & \multicolumn{2}{c}{Same task (27)} & \multicolumn{2}{c}{Same family (15)} & \multicolumn{2}{c}{No family record (18)} & & \\
    \cmidrule(lr){3-4}\cmidrule(lr){5-6}\cmidrule(lr){7-8}
    Method & Backbone & SR & Lat. & SR & Lat. & SR & Lat. & & \\
    \midrule
    Base agent    & Qwen3-VL-30B  & 51.9 & \textbf{9.2} & 0.0 & \underline{11.6} & 0.0 & \textbf{10.7} & & \\
    Base agent    & Qwen3.8-Flash & \underline{88.9} & 17.6 & 0.0 & 19.2 & 5.6 & 25.2 & & \\
    \ours{}       & Qwen3-VL-30B  & 81.5 & \underline{9.7} & \underline{33.3} & \textbf{10.3} & \underline{11.1} & \underline{17.6} & & \\
    \ours{}       & Qwen3.8-Flash & \textbf{92.6} & 15.3 & \textbf{46.7} & 21.8 & \textbf{16.7} & 24.8 & & \\
    \bottomrule
    \end{tabular}
    \vspace{-5mm}
\end{table}

\textbf{Memory covers the task.}
On AndroidWorld (Table~\ref{tab:groups}), \ours{} gains the most when memory holds the same task or template.
With Qwen3.8-Flash, it raises success from 80.0\% to 90.0\% (same task) and from 63.6\% to 86.4\% (same template),
while cutting latency from 21.7\,s to 9.3\,s and from 19.5\,s to 12.6\,s.
With Qwen3-VL-30B, same-task steps also get faster (8.8\,s to 6.2\,s), but same-template steps get slower (7.8\,s to 10.4\,s).
With Qwen3.8-Flash on these 42 tasks, only \ours{} solves 8 and only the base agent solves 1
(exact McNemar test, $p=0.039$; Appendix~\ref{app:paired}).
Prompting-based baselines also profit from such memory (GUI-explorer ties \ours{} at 86.4\% on same template)
but reason through every step, at 15.9--32.2\,s per step (failure~i).

\textbf{Memory covers only part of the task.}
In the SPA-Bench same-family group, the record comes from an easier task that shares only the first steps.
Neither base agent solves any of these 15 tasks, whereas \ours{} solves 33.3\% and 46.7\%,
consistent with reusing the shared steps.

\textbf{Memory covers nothing applicable.}
When memory holds only other tasks of the same app, or nothing from the app, the right decision is mostly to discard.
\ours{} then stays at or above the base agent with both backbones (e.g., 41.7\% vs.\ 39.6\% and 34.6\% vs.\ 30.8\% with Qwen3.8-Flash),
whereas RAG-prompting drops to 25.0\% and 11.5\% and GUI-explorer to 29.2\% and 23.1\%.
Retrieved records that do not fit the task can mislead agents that always condition on them (failure~iii).
On SPA-Bench, where 12 of the 18 no-family-record tasks have records of other families of the same app, \ours{} still gains 11.1 points with each backbone.

\subsection{Route Usage and Step Correctness (RQ3)}
\label{sec:routes}

Table~\ref{tab:routeshare} (Appendix~\ref{app:routestats}) shows that \ours{} uses memory in proportion to coverage.
Replay (Route~A) takes 56.1\% of the steps when memory holds the same task and 41.5\% when it holds the same template, and never fires otherwise.
Route~B takes 14.8\% of the steps in the same-template group, where records differ in their parameters,
and 10.1\% in the same-app group, where it borrows steps from other tasks.
With no record of the app, every step is discarded.

To check whether routed steps are correct, an LLM judge (Claude Fable 5.1), blind to the route, labeled 150 executed steps, 50 per route (Appendix~\ref{app:annotation}).
Route~A and Route~B steps reach a precision of 83.7\% and 78.0\% (Table~\ref{tab:annotation}), compared with 75.5\% for Route~C steps;
with about 50 steps per route, these differences are within sampling error (95\% intervals of about $\pm$10--12 points).
Route~A and Route~B steps also contain fewer harmful actions (2 and 3 vs.\ 6), although Route~C also handles the harder steps where no memory applies.
Most of their errors are no-ops, such as tapping a target clipped off-screen, which waste a step but leave the task recoverable.
Within Route~B, parameter-adapted steps are almost always correct (19 of 20), whereas borrowed steps are harder (20 of 30).

\subsection{Ablations (RQ4)}
\label{sec:ablation}

\begin{table}[t]
    \centering
    \caption{Ablations (Qwen3.8-Flash, AndroidWorld) on the tasks each ablation affects; variants in Appendix~\ref{app:ablation}.
    Calls: model calls per step; A/B/C: share of executed steps per route.}
    \label{tab:ablation}
    \small
    \setlength{\tabcolsep}{5pt}
    \begin{tabular}{lccc}
    \toprule
    Variant & SR (\%) & Calls & A/B/C (\%) \\
    \midrule
    \multicolumn{4}{l}{\emph{Same template} ($n{=}22$)} \\
    Base agent                          & 63.6 & 2.11 & 0/0/100 \\
    \ours{} (full)                      & \textbf{86.4} & 1.00 & 41/15/44 \\
    \quad Naive replay                  & 9.1  & 0.00 & 100/0/0 \\
    \quad w/o Route~C (no discard)      & 45.5 & 0.17 & 68/32/0 \\
    \midrule
    \multicolumn{4}{l}{\emph{Same app} ($n{=}30$ of 48)} \\
    Base agent                          & 43.3 & 2.03 & 0/0/100 \\
    \ours{} (full)                      & \textbf{50.0} & 2.19 & 0/14/86 \\
    \quad w/o semantic matching         & 20.0 & 0.60 & 0/73/27 \\
    \bottomrule
    \end{tabular}
    \vspace{-5mm}
\end{table}

In Table~\ref{tab:ablation}, we remove parts of the routing decision.
\textbf{Unchecked replay fails.}
Executing records verbatim, without the screen check, target mapping, parameter adaptation, or fallback, solves only 9.1\% of same-template tasks:
the records are correct for their own instance but wrong for this one.
\textbf{Verification and discarding are necessary.}
Without Route~C and the screen check, the best memory candidate is always executed and success falls to 45.5\%, below the base agent;
in the same-app group, dropping the screen check and the selector lets Route~B take 73\% of the steps and lowers success to 20.0\%:
applicability must be decided before re-grounding.
\textbf{Replay saves computation.}
The full system uses 1.00 model calls per step on same-template tasks, less than half of the base agent's 2.11.
When memory rarely applies (same app), matching adds 0.16 calls per step.

\section{Conclusion and Limitations}
\label{sec:conclusion}

\ours{} chooses for each retrieved step whether to replay it, re-ground it, or discard it,
instead of supporting only some of these uses.
On AndroidWorld and SPA-Bench, this choice improves success over same-backbone base agents,
can make the agent faster when memory covers the task, and avoids the losses that prompting-based memory suffers when memory does not fit.

\textbf{Limitations.}
First, a recorded step may also fail to apply because its target has moved or disappeared, for example after an app update~\citep{lu2025transbench}.
\ours{} leaves such steps to the base agent, and our evaluation does not vary the app version.
Second, Route~B finds the values to replace by aligning the two task descriptions word by word, so a value that the instruction only implies, such as a date given as ``tomorrow'', is also left to the base agent.
Third, records are retrieved by the benchmark's task identifiers, as this work studies how retrieved records are used.
Retrieving them from the instruction alone is left to future work.
Finally, task success is decided by an LLM judge, and the step-level labels come from another LLM (Claude Fable 5.1); neither has been checked by humans.

\newpage
\subsection*{AI Use Statement}
Large language models are part of the object of study: Qwen3-VL-30B-A3B-Instruct, Qwen3.8-Flash, and GUI-Owl-1.5 serve as agent backbones and baselines.
Qwen3.8-Flash also serves as the task-success judge on both benchmarks (Section~\ref{sec:setup}; Appendix~\ref{app:benchmarks}).
We used Claude Fable 5.1 as a blind annotator of 150 executed agent steps (Section~\ref{sec:routes}; protocol in Appendix~\ref{app:annotation}).
In this work, we used generative AI tools to help develop the conceptual framework, propose and refine hypotheses, design and give feedback on the method and experiments,
formulate the propositions in Appendix~\ref{app:theory} and write their proofs, implement the method, the baselines, and the experiment scripts,
clean and reformat the benchmark task lists and run records, interpret results, and translate text.
Generating synthetic data sets and qualitative or thematic data analysis are not applicable to this work.
Additionally, we used generative AI tools to search for, identify, and summarize related literature, suggest the structure and title of the paper, draft and edit the text, create figures,
and check bibliographic metadata against arXiv and CrossRef.
We have reviewed all AI-assisted work: the authors checked every claim, proof, and cited work, verified and tested the code, and checked every reported number against the experiment logs.
We take responsibility for the final content of this work, including text, claims, and artifacts produced with the aid of generative AI.

\subsection*{Ethics Statement}
This work studies memory reuse for mobile GUI agents.
Reliable reuse can reduce manual effort for routine phone tasks and help users who find small touch interfaces hard to operate,
but more capable GUI automation also carries risks: unintended actions in sensitive apps, handling of private data, and unauthorized automation.
Our experiments use benchmark tasks, run SPA-Bench on the authors' own app accounts, and involve no human subjects;
memory stores no data of other users, and the recorded screens, which may show the authors' account data, are not released.
Deployed systems should confirm sensitive actions with the user, log agent actions, and avoid storing private information in memory without consent.

\subsection*{Reproducibility Statement}
Section~\ref{sec:method} and Algorithm~\ref{alg:routing} (Appendix~\ref{app:algorithm}) specify the method;
Appendix~\ref{app:implementation} lists hyperparameters and implementation details, and Appendix~\ref{app:prompts} gives the prompts;
Appendix~\ref{app:benchmarks} describes the task sets, coverage groups, memory construction, and the task-success judge;
Appendices~\ref{app:baselines}--\ref{app:annotation} describe the baselines, ablation variants, route statistics, and annotation protocol.
The ancillary files of this arXiv submission contain our code, including the full prompts, and the task lists of both benchmarks.

\bibliography{references}
\bibliographystyle{iclr2027_conference}

\newpage
\appendix

\section{Routing Algorithm}
\label{app:algorithm}

Algorithm~\ref{alg:routing} gives the per-step procedure of \ours{}.

\begin{algorithm}[h]
\caption{\ours{}: per-step decision}
\label{alg:routing}
\begin{algorithmic}[1]
\Require instruction $y$, history $H_t$, current state $s_t$, candidate actions $\mathcal{A}_t$, records retrieved by task identity for this episode, base policy $\pi_{\mathrm{base}}$
\Ensure action $a_t\in\mathcal{A}_t$
\State \textit{// Structural matching: does the current screen match the node at the tracked position?}
\State $e=(v_i,v_j,a_e)\gets$ outgoing edge at the tracked position in the records
\If{there is no tracked position, the records are borrowed from other tasks, or $s_t$ does not match $v_i$ under the loose thresholds}
    \State \textit{// Semantic matching: no matching position in the record}
    \State $e\gets\mathrm{Select}_{\mathrm{edge}}(q_t,\mathcal{E}_t^K)$, where $\mathcal{E}_t^K$ are candidate steps whose screens match $s_t$ under the strict thresholds
\EndIf
\If{$e\neq\bot$}
    \If{$e$ carries no parameter of the old task}
        \State $\hat a\gets\mathrm{VerifyMap}(a_e,h_{v_i},h_t,\mathcal{A}_t)$ \Comment{Route A (Route B for a borrowed step)}
    \ElsIf{$e$ comes from the same task or template}
        \State $\hat a\gets\mathrm{Adapt}(e,y_e,y,\mathcal{A}_t)$ \Comment{Route B: re-ground}
    \Else
        \State $\hat a\gets\bot$ \Comment{borrowed steps that carry parameters are not reused}
    \EndIf
    \If{$\hat a\neq\bot$ and $\hat a$ was not just executed on an unchanged screen}
        \State \Return $\hat a$
    \EndIf
\EndIf
\State $a_C\gets\pi_{\mathrm{base}}(q_t)$ \Comment{Route C: discard}
\State advance the tracked position if $a_C$ matches the next recorded step
\State \Return $a_C$
\end{algorithmic}
\end{algorithm}

\section{Implementation Details}
\label{app:implementation}

\textbf{Memory graph.}
Before evaluation, recorded runs are stored as paths of a read-only graph and then frozen.
Each node stores the HTML-like UI description and the screenshot path of a recorded screen, and each edge stores the recorded action;
each run keeps its task description and task identity (task, template or task family, and app), taken from the benchmark's task identifiers.

\textbf{Per-step pipeline.}
At each step the agent dumps the UI tree, converts it into an HTML-like description, builds the candidate action set, and records an annotated screenshot.
Screens are compared with three similarities: cosine similarity (of bge-small-zh-v1.5 embeddings of the visible text of the HTML-like descriptions),
structural overlap (Jaccard overlap of the two screens' widget-structure sets), and fuzzy-matching similarity (rapidfuzz fuzzy text ratio, 0--100).
Two screens match when both the cosine and the structural score reach their thresholds, or when the fuzzy-matching score alone is at least 80.
While following a record we use loose thresholds (cosine $\geq0.60$, structural $\geq0.70$);
when searching the record for a new matching position we use strict thresholds (cosine $\geq0.65$, structural $\geq0.80$).
When the current screen has no matching position in the record, at most $K=8$ candidate steps are kept (each action at a given recorded step is kept only once);
the selector receives the goal, the task history, the current HTML, and each candidate step as its recorded action with the text on the element it operates on
(Appendix~\ref{app:prompts}), and returns one candidate or rejects all.
An episode ends after $\max(8,\lceil 3\,n_{\mathrm{ref}}\rceil)$ steps, where $n_{\mathrm{ref}}$ is the reference step count of the task; on AndroidWorld it is also capped at 15 minutes.
All backbones decode at temperature 0, and Qwen3.8-Flash runs in thinking mode.
Decoding through the cloud APIs is not fully deterministic, so repeated runs of the same policy can differ; each task is run once per method (Section~\ref{sec:setup}).
If the selector rejects all candidates, the mapping fails, or $\mathrm{Adapt}$ abstains,
the base agent acts on the same candidate set.

\textbf{Compute.}
SPA-Bench-60 runs on a physical Android phone, a Redmi Note 8 (Android 11, MIUI 12.5).
Three base-agent episodes (Qwen3-VL-30B: calendar\_0, gaodeditu\_0, gaodeditu\_7) ran on a second Redmi Note 8 running Android 9; all other SPA-Bench episodes ran on the Android 11 device.
AndroidWorld runs on the emulator recommended by AndroidWorld, a Pixel 6 AVD with Android 13 (API 33, google\_apis x86\_64 system image),
here with 6 CPU cores, 4\,GB of RAM, and a $1080\times2400$ display.
All backbones are called through the cloud APIs of Alibaba Cloud Bailian, directly or through a relay service, and no local GPU is used;
the bge-small-zh-v1.5 screen encoder runs locally on CPU.
Reported latency excludes the time spent on the device (UI-tree capture, screenshots, action execution, and post-action waiting); it includes model inference or API time and action parsing.

\section{Prompts}
\label{app:prompts}

The selector and the base agent each send the backbone a single user message, with no system message; the full prompt texts are in the code provided as ancillary files.

\textbf{Base agent.} The prompt of $\pi_{\mathrm{base}}$, which Route~C also uses, gives the user goal, the history of actions, the HTML-like description of the current screen, and the candidate actions, each with an index and a JSON description.
It asks the model to choose exactly one candidate that best advances the goal, with guidelines to rely on visible evidence, avoid repeating an action that made no progress, close overlays that block the screen, and declare completion only when the goal is visibly satisfied,
and to reply with a JSON object that names the chosen candidate and the element it acts on.

\textbf{Selector.} Semantic matching sends the prompt below; placeholders and header labels are in italics, and long lines are wrapped at $\hookrightarrow$.
Each candidate line gives the recorded step's position in its run, its action type, the text of its target element and any typed text or other arguments, and the first texts of its recorded screen.
The history lists each earlier step of the episode with its action and a short description of its effect, which the backbone writes after each step from the screens before and after it;
for a step taken from memory, the entry is the recorded step as it appears in the candidate list.
The prompt opens with a header that names the kind of record the candidates come from, set at retrieval by comparing the task identifiers of the record and of the current task, for example:
\lstset{basicstyle=\ttfamily\scriptsize,breaklines=true,breakatwhitespace=true,columns=fullflexible,frame=single,literate={'}{{\textquotesingle}}1,
  escapeinside={(*}{*)},postbreak=\mbox{\ensuremath{\hookrightarrow}\space}}
\begin{lstlisting}
(*\textrm{\textit{Same task:}}*)
You resume a recorded SUCCESSFUL run of exactly the current goal on a phone.
(*\textrm{\textit{Same template, other parameters:}}*)
You resume a recorded SUCCESSFUL run of the same task with other parameters on a phone.
The run was recorded for: (*\textrm{\textit{recorded goal}}*)
Names, texts, numbers and dates in the recorded steps belong to that run, not to the current goal;
never pick a step before the current goal's own values that it depends on have been entered.
(*\textrm{\textit{Easier task of the same family:}}*)
You resume a recorded SUCCESSFUL run of an easier task of the same task family on a phone.
The easier task was recorded for: (*\textrm{\textit{recorded goal}}*)
Its goal is the first part of the current goal: after its last recorded step the current goal still
needs more steps, so never pick a candidate that ends the task.
(*\textrm{\textit{Another task of the same app:}}*)
You resume a recorded SUCCESSFUL run of ANOTHER task in the same app on a phone.
The run was recorded for: (*\textrm{\textit{recorded goal}}*)
Only its way through the app (opening the app, reaching a screen, opening a form or a list) can be the
current goal's next step; its typed values, its choices and its end belong to that other task:
never pick a candidate that enters that task's values or ends the task, and output -1 when the
current goal needs something different here.
\end{lstlisting}
The full prompt is:
\begin{lstlisting}
(*\textrm{\textit{header}}*)
Goal:
(*\textrm{\textit{goal}}*)

Actions already taken in this episode:
(*\textrm{\textit{history}}*)

Current screen (HTML):
(*\textrm{\textit{HTML-like description of the current screen}}*)

Candidates are recorded steps of that run whose recorded screen looks like the current screen.
Pick the candidate that is the correct NEXT step from the current screen, given the actions already taken
(for example, do not repeat a step whose effect is already visible, do not skip a step whose effect is missing).
Output -1 if none of them is the correct next step.

Candidates (index | recorded step):
(*\textrm{\textit{one line per candidate, at most $K=8$}}*)

Return EXACTLY one JSON object on one line, no markdown: {"reason": "<short>", "candidate_index": <number>}
\end{lstlisting}

\section{Benchmarks and Memory Construction}
\label{app:benchmarks}

\textbf{AndroidWorld-116.}
One instance of each of the 116 AndroidWorld task templates forms the test set.
Before evaluation, the Qwen3.8-Flash base agent ran on other instances of the templates, generated with a different random seed.
Memory keeps every successful run, except runs that repeat the same action three or more times in a row (six for scrolls) or wait in more than 40\% of their steps:
42 of 49 successful runs, one each for 42 templates. It is frozen before evaluation and shared by both backbones and both memory baselines.
Coverage groups are defined by the closest record memory holds for the task:
\emph{same task} (a record with the same goal text, from another instance of the template),
\emph{same template} (a record of the template with other parameters),
\emph{same app} (no record of the template, but records of other tasks of the same app), and
\emph{none} (no record of the app).
The same-task and same-template groups therefore contain only templates the base agent solved at least once.

\textbf{SPA-Bench-60.}
The 60 tasks are drawn from the 150 Chinese single-app tasks of SPA-Bench.
We exclude tasks with real-world side effects: Alipay (payments), WeChat (posts), Tencent Meeting (real meetings), a hospital app, and one QQ Music task that shares to a WeChat contact.
We hold out 17 tasks as an unused reserve and exclude tasks our phone cannot run: Huawei Browser is not available, China Telecom tasks need an account we do not have, and JD shows a CAPTCHA at every launch.
We also exclude tasks that do not complete a run within a fixed retry budget and three tasks without a complete base-agent run.
From the remaining 84 tasks we draw 60 at random with a fixed seed, stratified over three strata of our task pool (19/30/11 tasks), which are unrelated to the coverage groups;
two apps that show a login or verification wall on our phone (Haodaifu, Meituan) are dropped with all their tasks, which are replaced in a random order fixed in advance.
SPA-Bench tasks come in families of three that extend each other, so a run of an easier task shares the first steps of a harder one.
Memory is the union of the successful base-agent runs of both backbones on our SPA-Bench task pool, which also contains tasks outside SPA-Bench-60.
The \emph{same task} group is the set of tasks at least one backbone solved when memory was built
(Qwen3.8-Flash solved 24 of these 27, Qwen3-VL-30B 14).
The \emph{same family} group holds tasks whose own run is not in memory but an easier task of the same family is.
The \emph{no family record} group holds tasks for which neither the task nor an easier task of its family is in memory; 12 of its 18 tasks belong to apps with records of other families.
Of the 33 tasks in the last two groups, 32 were failed by both backbones when memory was built,
so base-agent SR there is low by construction.
Tables~\ref{tab:awtasks} and~\ref{tab:spatasks} list the tasks of each group.

\textbf{Task success.}
Both benchmarks and all methods use the same judge, the single-app evaluator of SPA-Bench~\citep{chen2024spa}, with PaddleOCR for screen text and Qwen3.8-Flash as the judge model.
The judge receives the screenshot of every executed step with the executed action drawn on it, and a final screenshot taken 3\,s after the last action;
when an episode has more than 10 screenshots, the earlier ones are summarized in batches of 10, and the judge receives the summaries and the last 10 screenshots.
It returns success or failure.
We do not apply SPA-Bench's key-component check, which requires given UI strings to appear on screen, because these strings were collected on older app versions.
Each episode is judged once, whether or not the agent declared completion; an episode stopped by a time limit without a verdict counts as a failure.
AndroidWorld's rule-based checkers are recorded but not used for scoring.
In our setup they cannot score 41 of the 116 tasks.
The checkers of 25 question-answering tasks read the agent's answer through AndroidWorld's own environment loop, in which our agents do not run;
the checkers of the other 16 read the screen through that loop, require the task's app to still be in the foreground although our harness closes the apps before they run, or check a file rename that our task descriptions do not ask for.
On the other 75 tasks, the checkers agree with our success labels on 91.5\% (248/271) of the episodes of GUI-Owl, RAG-prompting, GUI-explorer, and \ours{} (Qwen3.8-Flash).

\begin{table}[p]
    \centering
    \setlength{\tabcolsep}{4pt}
    \renewcommand{\arraystretch}{0.9}
    \caption{AndroidWorld-116 tasks by coverage group.}
    \label{tab:awtasks}
    \footnotesize
    \begin{tabular}{@{}lp{0.8\linewidth}@{}}
    \toprule
    Group & Tasks\tabularnewline
    \midrule
    Same task (20) & \raggedright BrowserMaze, CameraTakePhoto, CameraTakeVideo, ClockStopWatchPausedVerify, ClockStopWatchRunning, ExpenseDeleteDuplicates, ExpenseDeleteDuplicates2, SystemBluetoothTurnOff, SystemBluetoothTurnOffVerify, SystemBluetoothTurnOn, SystemBluetoothTurnOnVerify, SystemBrightnessMaxVerify, SystemBrightnessMinVerify, SystemWifiTurnOff, SystemWifiTurnOffVerify, SystemWifiTurnOnVerify, TasksCompletedTasksForDate, TasksHighPriorityTasks, TasksIncompleteTasksOnDate, TurnOffWifiAndTurnOnBluetooth\tabularnewline
    Same template (22) & \raggedright ContactsAddContact, ContactsNewContactDraft, ExpenseAddSingle, ExpenseDeleteMultiple, ExpenseDeleteSingle, MarkorCreateNote, MarkorCreateNoteFromClipboard, NotesIsTodo, NotesRecipeIngredientCount, NotesTodoItemCount, OsmAndFavorite, OsmAndMarker, SimpleCalendarAddOneEvent, SimpleCalendarAddOneEventInTwoWeeks, SimpleCalendarAddRepeatingEvent, SimpleCalendarDeleteOneEvent, SimpleCalendarLocationOfEvent, SimpleSmsReply, SimpleSmsReplyMostRecent, SimpleSmsSend, SportsTrackerActivitiesCountForWeek, TasksHighPriorityTasksDueOnDate\tabularnewline
    Same app (48) & \raggedright BrowserDraw, BrowserMultiply, ClockTimerEntry, ExpenseAddMultiple, ExpenseAddMultipleFromGallery, ExpenseAddMultipleFromMarkor, ExpenseDeleteMultiple2, MarkorAddNoteHeader, MarkorChangeNoteContent, MarkorCreateFolder, MarkorCreateNoteAndSms, MarkorDeleteAllNotes, MarkorDeleteNewestNote, MarkorDeleteNote, MarkorEditNote, MarkorMergeNotes, MarkorMoveNote, MarkorTranscribeReceipt, MarkorTranscribeVideo, NotesMeetingAttendeeCount, OsmAndTrack, SimpleCalendarAddOneEventRelativeDay, SimpleCalendarAddOneEventTomorrow, SimpleCalendarAnyEventsOnDate, SimpleCalendarDeleteEvents, SimpleCalendarDeleteEventsOnRelativeDay, SimpleCalendarEventOnDateAtTime, SimpleCalendarEventsInNextWeek, SimpleCalendarEventsInTimeRange, SimpleCalendarEventsOnDate, SimpleCalendarFirstEventAfterStartTime, SimpleCalendarNextEvent, SimpleCalendarNextMeetingWithPerson, SimpleSmsResend, SimpleSmsSendClipboardContent, SimpleSmsSendReceivedAddress, SportsTrackerActivitiesOnDate, SportsTrackerActivityDuration, SportsTrackerLongestDistanceActivity, SportsTrackerTotalDistanceForCategoryOverInterval, SportsTrackerTotalDurationForCategoryThisWeek, SystemBrightnessMax, SystemBrightnessMin, SystemCopyToClipboard, SystemWifiTurnOn, TasksDueNextWeek, TasksDueOnDate, TurnOnWifiAndOpenApp\tabularnewline
    None (26) & \raggedright AudioRecorderRecordAudio, AudioRecorderRecordAudioWithFileName, FilesDeleteFile, FilesMoveFile, OpenAppTaskEval, RecipeAddMultipleRecipes, RecipeAddMultipleRecipesFromImage, RecipeAddMultipleRecipesFromMarkor, RecipeAddMultipleRecipesFromMarkor2, RecipeAddSingleRecipe, RecipeDeleteDuplicateRecipes, RecipeDeleteDuplicateRecipes2, RecipeDeleteDuplicateRecipes3, RecipeDeleteMultipleRecipes, RecipeDeleteMultipleRecipesWithConstraint, RecipeDeleteMultipleRecipesWithNoise, RecipeDeleteSingleRecipe, RecipeDeleteSingleWithRecipeWithNoise, RetroCreatePlaylist, RetroPlayingQueue, RetroPlaylistDuration, RetroSavePlaylist, SaveCopyOfReceiptTaskEval, SimpleDrawProCreateDrawing, VlcCreatePlaylist, VlcCreateTwoPlaylists\tabularnewline
    \bottomrule
    \end{tabular}

    \caption{SPA-Bench-60 tasks by coverage group.}
    \label{tab:spatasks}
    \footnotesize
    \begin{tabular}{@{}lp{0.78\linewidth}@{}}
    \toprule
    Group & Tasks\tabularnewline
    \midrule
    Same task (27) & \raggedright QQMusic\_1, QQMusic\_3, bilibili\_0, bilibili\_3, bilibili\_6, bilibili\_7, cainiaoguoguo\_0, cainiaoguoguo\_1, calendar\_0, deskclock\_0, deskclock\_1, gaodeditu\_0, gaodeditu\_3, hanglvzongheng\_3, hanglvzongheng\_4, hanglvzongheng\_5, qvnaer\_0, qvnaer\_1, qvnaer\_3, taobao\_1, taobao\_3, tencentdocs\_0, xiaohongshu\_0, xiaohongshu\_1, xiaohongshu\_3, youdaodict\_0, zhihu\_5\tabularnewline
    Same family (15) & \raggedright QQMusic\_4, QQMusic\_5, bilibili\_1, bilibili\_4, bilibili\_5, cainiaoguoguo\_2, calendar\_1, calendar\_2, deskclock\_2, gaodeditu\_2, gaodeditu\_4, taobao\_4, tencentdocs\_1, toutiao\_4, zhihu\_1\tabularnewline
    No family record (18) & \raggedright douyin\_3, douyin\_5, eleme\_0, eleme\_3, eleme\_4, eleme\_5, gaodeditu\_7, gaodeditu\_8, hanglvzongheng\_1, settings\_0, settings\_1, taobao\_0, taobao\_6, taobao\_7, taobao\_8, toutiao\_0, toutiao\_1, zhihu\_3\tabularnewline
    \bottomrule
    \end{tabular}
\end{table}
\FloatBarrier

\section{Baselines}
\label{app:baselines}

\textbf{Base agent.} The same backbone, prompts, and candidate-action interface as \ours{}, without memory and without the repeat guard.

\textbf{GUI-Owl-1.5.} A native GUI agent model from Mobile-Agent-v3.5~\citep{xu2026mobileagentv35}, which receives the screenshot.
We call GUI-Owl-1.5 through Alibaba Cloud Bailian under the service name gui-plus-2026-02-26; the service does not disclose its parameter count.
To compare decision-making alone, it uses the same candidate-action interface as the base agent rather than the official Mobile-Agent-v3.5 prompt and action parser:
at each step it reads the screenshot and the UI tree text and selects one of the candidate actions by its index.
Decoding is at temperature 0.

\textbf{RAG-prompting.} The Qwen3.8-Flash base agent with the retrieved records added to its prompt.
RAG-prompting uses the same memory as \ours{}, including the records of the task's own template.
At each step the model first summarizes the current page into a task intent, a page summary, and an app intent.
The top 5 memory pages are retrieved by $0.35\times$task-intent $+\,0.50\times$page-summary $+\,0.15\times$app-intent similarity;
the page summaries and intents of memory pages were written by an LLM when memory was built.
The model then infers the current subtask, and the top 5 recorded steps leaving those pages are retrieved by the similarity between the subtask and each step's description.
There is no threshold or memory gate, so records are added to the prompt at every step.
Each record lists the step description written at recording time, the action description (including the recorded element index), the match similarity, the goal text of the source task, and its run, page, and step identifiers.
The model picks one of the current candidate actions; if it picks none, or no subtask can be inferred, the base agent acts on the same candidate set.

\textbf{GUI-explorer.} We adapt \citet{xie2025guiexplorer}, which mines per-element knowledge by autonomous exploration, retrieves it by visual element patches,
and ranks it with an MLLM: here transition-aware knowledge is mined from the recorded runs, retrieved for the current task and screen from the text UI tree,
and injected into the action prompt of the Qwen3.8-Flash agent through the same candidate-action interface.
The base agent and all baselines choose their actions from the same list of candidate actions generated from the UI tree.
The knowledge comes from the same frozen memory as \ours{}: on AndroidWorld, 42 successful runs with 316 steps, each step becoming one entry.
An entry holds the screen text before and after the step, the task description, the action type, and the step description that the recording agent wrote for that step, which serves as the function description.
We replace the screenshot-based element retrieval of GUI-explorer with text retrieval over the UI tree:
entries are ranked by $0.7\times$page-text similarity $+\,0.3\times$task similarity, and the action types and function descriptions of the top 5 are added to the prompt as a separate section.
The adaptation also differs from the original method in two ways: the knowledge is mined from existing records rather than a separate exploration phase,
and function descriptions reuse the recorded step descriptions rather than being distilled by a separate MLLM.

\section{Ablation Variants}
\label{app:ablation}

All variants use Qwen3.8-Flash, the protocol of Table~\ref{tab:main}, and the fixed task-set denominator.
\textbf{Full:} Route~A replays the record after a screen check, Route~B re-grounds a recorded step by replacing the parameters it carries from the old task,
and Route~C lets the base agent act when memory offers no applicable step.
\textbf{Naive replay:} the record is executed verbatim (recorded element indices and text), with no screen check, no grounding, and no parameter adaptation.
\textbf{w/o Route~C:} no base-agent fallback and no screen check; the best memory candidate is always executed, and the episode ends with a
status-complete step when memory offers nothing (this terminating step calls no model and is not counted in A/B/C).
\textbf{w/o semantic matching:} the screen check and the selector are both removed and only the target mapping is kept; the first borrowed step whose target element grounds on the current screen is executed without checking the screen or consulting the selector.
The same-template ablations use all 22 tasks; the same-app ablation uses the first 30 of the 48 same-app tasks in a fixed order.

\section{Route Statistics}
\label{app:routestats}

Table~\ref{tab:routeshare} gives the share of executed steps taken by each route in each coverage group of AndroidWorld-116 (Section~\ref{sec:routes}).
Steps borrowed from other tasks of the same app count as Route~B: although they are executed as mapped, they are transferred to a task they were not recorded for, and only their operation is reused.
Under Eq.~(\ref{eq:regimes}) they are in the direct regime; counted as Route~A, they would move the 10.1\% of same-app steps from Route~B to Route~A.

\begin{table}[h]
    \centering
    \caption{Share of executed steps taken by each route (\ours{}, Qwen3.8-Flash, AndroidWorld-116), by coverage group.}
    \label{tab:routeshare}
    \small
    \begin{tabular}{lccc}
    \toprule
    Group & A (\%) & B (\%) & C (\%) \\
    \midrule
    Same task     & 56.1 & 0.0  & 43.9 \\
    Same template & 41.5 & 14.8 & 43.7 \\
    Same app      & 0.0  & 10.1 & 89.9 \\
    None          & 0.0  & 0.0  & 100.0 \\
    \midrule
    All           & 13.7 & 8.1  & 78.2 \\
    \bottomrule
    \end{tabular}
\end{table}

\section{Step-Level Annotation Protocol}
\label{app:annotation}

We sampled 150 executed steps of \ours{} (Qwen3.8-Flash, AndroidWorld), 50 per route, with at most four steps per task for Routes~A and~B and two for Route~C.
The annotator, Claude Fable 5.1, was shown the goal, the screen before the action with the target marked, the screen after the action, and the action,
and was blind to the route.
It labeled each action \emph{correct} (a sensible step toward the goal from that screen, or a correct completion or answer), \emph{incorrect}, or \emph{unclear}.
Incorrect steps are split into \emph{no-ops} (the screen did not change, e.g., because the target was clipped off-screen) and
\emph{harmful} actions (wrong element, content, or direction).
Precision is the number of correct steps divided by the number of steps that could be judged.
Route~B mixes parameter-adapted replays (20 steps, 19 correct) and steps borrowed from another task's record (30 steps, 20 correct).
Table~\ref{tab:annotation} reports the labels per route.

\begin{table}[h]
    \centering
    \caption{Blind step-level annotation of 150 executed steps of \ours{} (Qwen3.8-Flash, AndroidWorld; 50 per route); precision is correct steps over judged steps.}
    \label{tab:annotation}
    \small
    \begin{tabular}{lcccccc}
    \toprule
    Route & Steps & Corr. & No-op & Harm. & Uncl. & Prec. (\%) \\
    \midrule
    A: replay     & 50 & 41 & 6 & 2 & 1 & 83.7 \\
    B: re-ground  & 50 & 39 & 8 & 3 & 0 & 78.0 \\
    C: discard    & 50 & 37 & 6 & 6 & 1 & 75.5 \\
    \midrule
    All           & 150 & 117 & 20 & 11 & 2 & 79.1 \\
    \bottomrule
    \end{tabular}
\end{table}

\section{Paired Comparison}
\label{app:paired}

On the 42 AndroidWorld tasks whose template is in memory (same task and same template), we compare \ours{} with the Qwen3.8-Flash base agent task by task.
The base agent alone solves $b=1$ task and \ours{} alone solves $c=8$, a difference of $+16.7$ points.
Under the null hypothesis of equal success, the $b+c=9$ discordant tasks split as $\mathrm{Binomial}(9,\tfrac12)$,
and the exact two-sided McNemar test gives $p=2\sum_{k=0}^{1}\binom{9}{k}2^{-9}=20/512\approx0.039$.

\section{When Is Route-Based Reuse Safe?}
\label{app:theory}

We give sufficient conditions under which a re-grounded step transfers correctly, and show that the discard route bounds negative transfer.
The analysis is local to one recorded step and does not assume that the recorded and current UI graphs are isomorphic.

\textbf{Setup.}
Let $F(s,a)$ be the next state after executing action $a$ in state $s$, and let $\phi(s)$ be the task-relevant semantic state.
A recorded step $e=(v_i,v_j,a_e)$ executes $a_e$ in state $s_i$ and reaches $s_j=F(s_i,a_e)$.
Consider a shift $\tau=(\tau_S,\tau_A)$ that maps the states and action of this step to the current episode.
We call $\tau$ \emph{transition-consistent on $e$} if
\begin{equation}
    \phi(\tau_S(s_i))=\phi(s_i),\qquad \phi(\tau_S(s_j))=\phi(s_j),\qquad F(\tau_S(s_i),\tau_A(a_e))=\tau_S(F(s_i,a_e)).
    \label{eq:consistency}
\end{equation}
The first two conditions require semantic invariance; the third requires the local dynamics to commute with the shift.

\begin{proposition}[Transfer from a local correspondence]
\label{prop:transfer}
Let $\tau$ be transition-consistent on $e$. If the current state is $\bar s=\tau_S(s_i)$ and re-grounding returns $\hat a=\tau_A(a_e)\in\mathcal{A}_t$,
then executing $\hat a$ reaches the recorded semantic successor: $\phi(F(\bar s,\hat a))=\phi(s_j)$.
\end{proposition}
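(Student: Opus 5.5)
The proof is a direct chain of equalities built from the three conditions in Eq.~(\ref{eq:consistency}). No earlier result is needed beyond the definition of transition-consistency. The plan is to rewrite the current successor $F(\bar s,\hat a)$ as the image under $\tau_S$ of the recorded successor, and then use semantic invariance to strip off the shift.

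\textbf{Step 1.} Substitute the hypotheses $\bar s=\tau_S(s_i)$ and $\hat a=\tau_A(a_e)$ to get $F(\bar s,\hat a)=F(\tau_S(s_i),\tau_A(a_e))$.

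\textbf{Step 2.} Apply the commutation condition, the third equality in Eq.~(\ref{eq:consistency}). This gives $F(\tau_S(s_i),\tau_A(a_e))=\tau_S(F(s_i,a_e))$. Since $F(s_i,a_e)=s_j$ by definition of the recorded step $e$, we obtain $F(\bar s,\hat a)=\tau_S(s_j)$.

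\textbf{Step 3.} Apply $\phi$ to both sides and use the second invariance condition, $\phi(\tau_S(s_j))=\phi(s_j)$. This yields $\phi(F(\bar s,\hat a))=\phi(s_j)$, which is the claim.

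The first invariance condition, $\phi(\tau_S(s_i))=\phi(s_i)$, is not needed for the equality itself. I would remark that it only justifies reading $\bar s$ as semantically the recorded source state, which is what licenses the matching step that produced $\tau$. The hypothesis $\hat a\in\mathcal{A}_t$ only ensures that the action is executable, so that $F(\bar s,\hat a)$ is a genuine transition of the current episode.

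There is no substantive obstacle; everything follows from the definitions. The one point requiring care is to make sure the action returned by re-grounding is exactly $\tau_A(a_e)$, not merely an action with the same semantic effect, because the commutation identity is stated only for that specific action. I would state this explicitly so the reader sees that correctness rests entirely on the assumed local correspondence and not on any global isomorphism of UI graphs.
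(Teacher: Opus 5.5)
Your proposal is correct and follows the same chain of equalities as the paper's proof: substitute the hypotheses, apply the commutation condition together with $F(s_i,a_e)=s_j$, then use semantic invariance at the successor. You also rightly note that the source-side invariance $\phi(\tau_S(s_i))=\phi(s_i)$ goes unused, and the paper's argument likewise invokes only commutation and invariance at $s_j$.
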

\begin{proof}
$\phi(F(\bar s,\hat a))=\phi(F(\tau_S(s_i),\tau_A(a_e)))=\phi(\tau_S(F(s_i,a_e)))=\phi(\tau_S(s_j))=\phi(s_j)$,
using transition consistency and then semantic invariance at the successor.
\end{proof}

Route~A is the special case $\tau_A(a_e)=\mu_t(e)$, in which the current action realizes the recorded one as recorded;
Route~B covers shifts in which $\tau_A$ changes the parameters the action carries, which is why the recorded action cannot be copied but can be re-grounded with the parameters of the current task.

\begin{proposition}[Discard bounds negative transfer]
\label{prop:safety}
Let per-step utility lie in $[0,1]$ and compare the routed policy with $\pi_{\mathrm{base}}$ on the same contexts.
Let $\mathcal{C}$ be the event that an accepted match (Route~A or~B) is correct, $\mathcal{W}$ the event that an incorrect match is accepted,
and $\mathcal{R}$ the discard event, with $p=\Pr(\mathcal{C})$, $\delta=\Pr(\mathcal{W})$, and $g=\mathbb{E}[U_{\mathrm{route}}-U_{\mathrm{base}}\mid\mathcal{C}]$.
Then $J_{\mathrm{route}}-J_{\mathrm{base}}\geq p\,g-\delta$.
\end{proposition}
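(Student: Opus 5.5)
We prove the bound by splitting the expected per-step utility difference with the law of total expectation over the three events $\mathcal{C}$, $\mathcal{W}$, $\mathcal{R}$. The first step is to check that they partition the sample space of contexts. Every step either accepts a memory action (Route~A or~B) or discards (Route~C). An accepted action is either a correct match ($\mathcal{C}$) or an incorrect one ($\mathcal{W}$). So $\Pr(\mathcal{C})+\Pr(\mathcal{W})+\Pr(\mathcal{R})=1$. Writing $J_{\mathrm{route}}-J_{\mathrm{base}}=\mathbb{E}[U_{\mathrm{route}}-U_{\mathrm{base}}]$, with both policies evaluated on the same context $q_t$, we obtain
\begin{equation*}
J_{\mathrm{route}}-J_{\mathrm{base}}
= p\,g
+\delta\,\mathbb{E}[U_{\mathrm{route}}-U_{\mathrm{base}}\mid\mathcal{W}]
+\Pr(\mathcal{R})\,\mathbb{E}[U_{\mathrm{route}}-U_{\mathrm{base}}\mid\mathcal{R}],
\end{equation*}
where any term whose event has probability zero is simply dropped.

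Next we bound the remaining two terms separately.

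\textbf{The discard term is zero.} On $\mathcal{R}$, the routed policy executes $a_C=\pi_{\mathrm{base}}(q_t)$, with the same prompt and the same candidate set $\mathcal{A}_t$ as the base agent (Section~\ref{sec:routing}). The two policies therefore take the same action in the same context, so $U_{\mathrm{route}}=U_{\mathrm{base}}$ there. This is the step where the modelling convention matters most. Backbone decoding is not fully deterministic, so we must read ``compare on the same contexts'' as a coupling: on $\mathcal{R}$, both policies draw the identical base action. I would state this coupling explicitly as part of the setup. Without it, the identity holds only in expectation, $\mathbb{E}[U_{\mathrm{route}}\mid\mathcal{R}]=\mathbb{E}[U_{\mathrm{base}}\mid\mathcal{R}]$, which is still sufficient for the argument.

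\textbf{The wrong-acceptance term is at least $-\delta$.} Utilities lie in $[0,1]$, so $U_{\mathrm{route}}-U_{\mathrm{base}}\ge -1$ pointwise. Hence the $\mathcal{W}$ term is at least $-\delta$.

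Substituting both bounds into the decomposition gives $J_{\mathrm{route}}-J_{\mathrm{base}}\ge p\,g-\delta$.

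I would close with two remarks. First, the bound is a per-step statement, since per-step utility is the quantity defined. Extending it to episodes would require summing over steps, and the induced state distributions then differ between the two policies; the proposition sidesteps this by fixing the contexts. Second, under the hypothesis ``if correct matches do not hurt'' mentioned in Section~\ref{sec:routing}, we have $g\ge 0$. The bound then reduces to $J_{\mathrm{route}}-J_{\mathrm{base}}\ge-\delta$: the loss is bounded by the false-acceptance rate.
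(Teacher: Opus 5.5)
Your proposal is correct and follows the paper's proof: decompose $J_{\mathrm{route}}-J_{\mathrm{base}}$ over the three events, note that the difference vanishes on $\mathcal{R}$ because Route~C runs $\pi_{\mathrm{base}}$ on the same context, and bound it below by $-1$ on $\mathcal{W}$. Your check that $\mathcal{C},\mathcal{W},\mathcal{R}$ partition the steps and your coupling remark for nondeterministic decoding make explicit what the paper's three-line proof leaves implicit.
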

\begin{proof}
On $\mathcal{R}$, Route~C executes $\pi_{\mathrm{base}}$, so the conditional difference is zero.
On $\mathcal{W}$, the difference is at least $-1$. Summing over the three events gives the bound.
\end{proof}

When correct matches do not hurt ($g\geq0$), negative transfer is bounded by the false-acceptance rate $\delta$.
The ablations in Section~\ref{sec:ablation} are consistent with this view: removing Route~C or semantic matching raises $\delta$
by forcing matches that should be rejected, and success falls below the base agent.
These are idealized sufficient conditions, not guarantees that the prompted matching and selection modules satisfy them;
they identify two testable requirements: executable grounding of a local correspondence and a low false-acceptance rate.

\end{document}

%% file: math_commands.tex
\usepackage{amsmath,amsfonts,bm}

\def\eqref#1{equation~\ref{#1}}

\def\1{\bm{1}}

\DeclareMathAlphabet{\mathsfit}{\encodingdefault}{\sfdefault}{m}{sl}
\SetMathAlphabet{\mathsfit}{bold}{\encodingdefault}{\sfdefault}{bx}{n}

